\pdfoutput=1

\documentclass[11pt]{article}

\usepackage{EACL2023}
\thispagestyle{plain}
\pagestyle{plain}
\pagenumbering{arabic}

\usepackage{times}
\usepackage{latexsym}

\usepackage[T1]{fontenc}

\usepackage[utf8]{inputenc}

\usepackage{microtype}

\usepackage{inconsolata}

%
%

\title{On the inconsistency of separable losses for structured prediction}


\author{Caio Corro\\
Universite Paris-Saclay, CNRS, LISN, 91400, Orsay, France\\
\texttt{caio.corro@lisn.upsaclay.fr}}


\usepackage{amsmath,amsfonts,bm}









\def\eqref#1{equation~\ref{#1}}









\def\1{\bm{1}}




\def\rvx{{\mathbf{x}}}
\def\rvy{{\mathbf{y}}}


\def\ervy{{\textnormal{y}}}




\def\va{{\bm{a}}}
\def\vb{{\bm{b}}}
\def\vc{{\bm{c}}}
\def\vd{{\bm{d}}}
\def\ve{{\bm{e}}}
\def\vf{{\bm{f}}}
\def\vg{{\bm{g}}}
\def\vh{{\bm{h}}}
\def\vi{{\bm{i}}}

\def\vw{{\bm{w}}}
\def\vx{{\bm{x}}}
\def\vy{{\bm{y}}}


\def\evw{{w}}

\def\evy{{y}}


\DeclareMathAlphabet{\mathsfit}{\encodingdefault}{\sfdefault}{m}{sl}
\SetMathAlphabet{\mathsfit}{bold}{\encodingdefault}{\sfdefault}{bx}{n}











\newcommand{\E}{\mathbb{E}}

\newcommand{\R}{\mathbb{R}}



\DeclareMathOperator*{\argmax}{arg\,max}
\DeclareMathOperator*{\argmin}{arg\,min}

\usepackage{amsmath}
\usepackage{bbm}
\usepackage{verbatim}

\usepackage{amsthm}
\newtheorem{definition}{Definition}
\newtheorem{theorem}{Theorem}

\usepackage[cal=boondoxo]{mathalpha}
\newcommand\risk{\mathcal{r}}

\usepackage{tikz}
\usetikzlibrary{decorations.pathreplacing,calligraphy,matrix,positioning}

\begin{document}
\maketitle
\begin{abstract}
In this paper, we prove that separable negative log-likelihood losses for structured prediction are not necessarily Bayes consistent, or, in other words, minimizing these losses may not result in a model that predicts the most probable structure in the data distribution for a given input.
This fact opens the question of whether these losses are well-adapted for structured prediction and, if so, why.
\end{abstract}

\section{Introduction}
\label{sec:intro}

Modern natural language processing (NLP) heavily relies on machine learning (ML), where prediction models are learned by minimizing a loss function over the training data.
As such, loss functions play a central role in the design of these systems and it is important to understand their statistical properties in order to guarantee that the corresponding training objectives are well defined.
Although this topic is well studied in the ML community \cite{gabor2004consistency,lin2004fisher,zhang2004statistical,zhang2004,bartlett2006convexity,tilmann2007,liu07consistency,tewari07consistency,reid10,williamson2016,duchi2018,blondel2020fy,nowak22maxmargin}, \emph{inter alia}, there has been less focus on the structured prediction setting apart from a few recent works \cite{blondel2019projection,nowak2019general,nowak20maxminmargin}.

In this paper, we emphasize the fact that, despite achievements in terms of accuracy, statistical behavior of loss functions used in practice for structured prediction in NLP are not always well understood.
We illustrate this fact by proving that commonly used separable loss functions for named entity recognition (NER) and dependency parsing are not Bayes consistent, meaning that training a model with these loss functions will not necessarily result in the prediction of the most the probable output for a given input in the data distribution.

\section{Bayes consistency}

We denote inputs and outputs as $\vx \in X$ and $\vy \in Y$, respectively.
We assume each $\vy \in Y$ is a binary vector whose elements are indexed by a set $C$, \emph{i.e.}\ $\vy \in \{0, 1\}^C$, where $C$ is problem dependent.
For example, in the $k$ multiclass classification case, we have $C \ = [k]$, where we use the shorthand $[k] = \{1,2,...,k\}$, and $Y$ is defined as the set of standard bases (one-hot vectors) of dimension $k$, meaning that $|Y| = k$.
More generally, the vector $\vy$ is an indicator of ``selected'' parts in $C$ and, in the  structured prediction case, several parts can be jointly selected.
Note that it is usual to assume that the parts in $C$ can depend on the input $\vx$.
Without loss of generality, we omit this detail as we will study loss functions in the pointwise setting.

A scoring model $f \in F$ is a function $f: X \to \R^C$ that returns scores associated with each part in $C$ for a given input, \emph{e.g.}\ the score of each class in a multiclass classification model.
The actual prediction of the model is the output of maximum linear score:
\begin{align}
    \label{eq:map}
    \widehat\vy(\vx) \in \argmax_{\vy \in Y}~\langle \vy, f(\vx) \rangle,
\end{align}
where $\langle \cdot, \cdot \rangle$ denotes the inner product.
We refer to computing Equation~\ref{eq:map} as maximum \emph{a posteriori} (MAP) inference.

A loss function compares a vector of scores with an expected output.
Importantly, the 0-1 loss function is defined as follows:
\begin{align*}
    \ell(\vw, \vy) = \begin{cases}
        0\quad&\text{if } \vy \in \argmax_{\vy' \in Y} \langle \vw, \vy' \rangle, \\
        1\quad&\text{otherwise,}
    \end{cases}
\end{align*}
where $\vw \in \R^k$ is a vector of part scores, \emph{i.e.}\ $\vw = f(\vx)$ for a given input $\vx$.
In order to choose a scoring function $f \in F$, it is appealing to select one that minimizes this loss over the data distribution:
\begin{align*}
    \risk^*
    &
    = \inf_{f \in F} \risk(f)
    = \inf_{f \in F} \E_{\rvx, \rvy}[\ell(f(\rvx), \rvy)],
\end{align*}
where $\rvx$ and $\rvy$ are random variables over inputs and outputs, respectively, and $\inf$ denotes the infimum.
The value $\risk(f)$ is the Bayes risk of function $f$ and $\risk^*$ is the optimal Bayes risk.
For theoretical purposes, it is often assumed that the class of functions $F$ is rich enough (the set of all measurable mappings) to obtain the best possible risk.
Then, the optimal Bayes risk is equal to:
\begin{align*}
    \risk^*
    &= \E_\rvx[1 - \max_{\vy \in Y} p(\rvy = \vy | \rvx)],
\end{align*}
or, in other words, it is the probability of making an error when the classifier predicts the most probable class for each input.

Unfortunately, in practice it is not convenient to use the 0-1 loss $\ell$ as it is nonconvex and has null derivatives almost everywhere.
Instead, a surrogate $\widetilde{\ell}$ can be used as a loss function:
\begin{align*}
    \widetilde{\risk}^*
    &
    = \inf_{f \in F}~\widetilde{\risk}(f)
    = \inf_{f \in F} \E_{\rvx, \rvy}[\widetilde{\ell}(f(\rvx), \rvy)],
\end{align*}
where $\widetilde{r}(f)$ is the surrogate risk of function $f$ and $\widetilde{\risk}^*$ is the optimal surrogate risk.
An important desired property of surrogate losses is their consistency with the 0-1 loss, \emph{i.e.}\ the fact that minimizing the surrrogate risk leads to a prediction model of optimal Bayes risk \cite{gabor2004consistency,lin2004fisher,zhang2004statistical,bartlett2006convexity,liu07consistency,tewari07consistency}.

\begin{definition}

A surrogate loss $\widetilde{\ell}$ is said to be Bayes consistent\footnote{This property is also referred to as Fisher consistency \cite{lin2004fisher,bartlett2006convexity,liu07consistency} and classification calibration \cite{williamson2016}.} if:
\begin{align*}
    f^* \in \argmin_{f \in F}~\widetilde{\risk}(f) \implies \risk(f^*) = \risk^*.
\end{align*}
\end{definition}

Note that this property can be checked independently for each input $\vx$ (called pointwise Bayes consistency) as we assume a rich enough class of functions $F$.
In other words, we redefine the pointwise (optimal) surrogate risk as:
\begin{align*}
    \widetilde{\risk}^*
    &
    = \inf_{\vw \in \R^C}~\widetilde{\risk}(\vw)
    = \inf_{\vw \in \R^C} \E_{\rvy | \rvx = \vx}[\widetilde{\ell}(\vw, \rvy)],
\end{align*}
for any $\vx$ such that $p(\rvx = \vx) > 0$, and similarly for the optimal Bayes risk.
The vector $\vw$ should be interpreted as the model scores, \emph{i.e.}\ $\vw = f(\vx)$.
\section{Negative log-likelihood loss}

The negative log-likelihood loss (NLL), also known as the conditional random field loss \cite{lafferty2001crf}, is defined as follows:
\begin{align*}
    \widetilde{\ell}_{(nll)}(\vw, \vy)
    &= - \langle \vw, \vy \rangle + \log \sum_{\vy' \in Y} \exp\langle \vw, \vy' \rangle.
\end{align*}
In the following, we will refer to computing the log-sum-exp term of the NLL loss as marginal inference due to its connection with marginal probabilities \cite{wainwright2008exp}.

\begin{theorem}
\label{th:nll}
Under mild conditions on the data distribution, the surrogate loss $\widetilde{\ell}_{(nll)}$ is Bayes consistent.
\end{theorem}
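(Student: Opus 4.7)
The plan is to express the pointwise NLL surrogate risk as a Kullback--Leibler divergence from the conditional data distribution to a Gibbs family, and then argue that when this family is well-specified, its minimizer induces the correct MAP prediction. Fix $\vx$ with $p(\rvx = \vx) > 0$, write $q(\vy) := p(\rvy = \vy \mid \rvx = \vx)$, and define the Gibbs parameterization
\begin{align*}
    p_\vw(\vy) = \frac{\exp \langle \vw, \vy \rangle}{\sum_{\vy' \in Y} \exp \langle \vw, \vy' \rangle}, \qquad \vy \in Y,
\end{align*}
so that $\widetilde{\ell}_{(nll)}(\vw, \vy) = -\log p_\vw(\vy)$. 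Substituting yields
\begin{align*}
    \widetilde{\risk}(\vw)
    = -\sum_{\vy \in Y} q(\vy) \log p_\vw(\vy)
    = H(q) + \KL(q \,\|\, p_\vw),
\end{align*}
where $H(q) = -\sum_\vy q(\vy) \log q(\vy)$ is independent of $\vw$, so minimizing $\widetilde{\risk}$ over $\vw \in \R^C$ is the same as minimizing $\KL(q \,\|\, p_\vw)$. Note that $\widetilde{\risk}$ is convex as the sum of a linear term and a log-sum-exp.

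The \emph{mild condition} I would invoke is well-specification: for (almost) every $\vx$ in the support of $p$, the conditional $q$ belongs to the Gibbs family $\{p_\vw : \vw \in \R^C\}$, or to its topological closure (to cover degenerate conditionals obtained in the limit $\|\vw\| \to \infty$). Under this assumption there exists $\vw^*$, possibly only as a limit of a sequence, with $p_{\vw^*} = q$, hence $\KL(q \,\|\, p_{\vw^*}) = 0$ and $\vw^* \in \argmin_{\vw} \widetilde{\risk}(\vw)$. MAP inference then yields
\begin{align*}
    \argmax_{\vy \in Y} \langle \vw^*, \vy \rangle
    = \argmax_{\vy \in Y} p_{\vw^*}(\vy)
    = \argmax_{\vy \in Y} q(\vy),
\end{align*}
because $\exp$ is monotone and the normalizer $Z(\vw^*)$ is constant in $\vy$. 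This is exactly the Bayes-optimal predictor, so $\widetilde{\ell}_{(nll)}$ is pointwise Bayes consistent, which gives Bayes consistency in the sense of the definition.

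The main obstacle is handling cases where the infimum of $\widetilde{\risk}$ is not attained at a finite $\vw^*$, for instance when $q$ concentrates on a single structure and the corresponding parameter sits at infinity. One then works with any sequence $\vw^{(n)}$ with $p_{\vw^{(n)}} \to q$ and verifies that the induced MAP predictions stabilize on $\argmax q$ for $n$ large enough, which follows from a standard continuity argument on the finite set $Y$. The rest of the proof is routine bookkeeping, with convexity of $\widetilde{\risk}$ ruling out spurious minimizers.
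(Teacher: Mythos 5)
Your proof is correct and is essentially the paper's argument in different clothing: the paper derives the first-order optimality condition showing that the Boltzmann distribution induced by the minimizer equals $p(\rvy \mid \rvx = \vx)$ and then imposes as its ``mild condition'' that this optimum be realizable by the linear parameterization $\langle \vw, \vy\rangle$, which is exactly your well-specification hypothesis, and both arguments conclude via monotonicity of $\exp$. Your explicit cross-entropy $=$ entropy $+$ KL decomposition and your handling of minimizers at infinity are marginally more careful than the paper's first-order-condition derivation, but the key idea and the assumption are the same.
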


\begin{proof}
The optimal pointwise surrogate Bayes risk is defined as:
\begin{align*}
    \widetilde{\risk}^*_{(nll)}
    &= \inf_{\vw \in \R^C}
    \begin{array}[t]{l}
        -\E_{\rvy | \rvx = \vx}[\langle \vw, \rvy \rangle] \\
        + \log \sum_{\vy' \in Y} \exp \langle \vw, \vy' \rangle.
    \end{array}
    \\
\intertext{We substitute $\vw(\vy) = \langle \vw, \vy \rangle$ for all $\vy \in Y$:}
    &= \inf_{\substack{\forall \vy \in Y: \\ \vw(\vy) \in \R}}
    \begin{array}[t]{l}
        - \E_{\rvy|\rvx = \vx}[\vw(\rvy)] \\
        + \log \sum_{\vy' \in Y} \exp \vw(\vy').
    \end{array}
\end{align*}
We denote $\widehat\vw(\vy), \forall \vy \in Y$, an optimal solution of the minimization.
By first order optimality conditions, we have:
\begin{align}
    \nonumber
    &\frac{\partial}{\partial \widehat\vw(\vy)} \left(
        \begin{array}[l]{l}
            - \E_{\rvy|\rvx = \vx}[\widehat\vw(\rvy)] \\
            + \log \sum_{\vy' \in Y} \exp(\widehat\vw(\vy'))
        \end{array}
    \right) = 0
    \\
    &\implies
    \frac{\exp \widehat\vw(\vy)}{\sum_{\vy' \in Y} \exp\widehat\vw(\vy')} = p(\rvy = \vy | \rvx = \vx)
    \label{eq:nll_bayes}
\end{align}
which implies Bayes consistency under the condition that there exists a vector $\vw \in \R^C$ such that $\forall \vy \in Y: \langle \vw, \vy \rangle = \widehat\vw(\vy)$.
\end{proof}

To understand why Equation~\ref{eq:nll_bayes} implies Bayes consistency, note that:
\begin{align*}
    \frac{\exp \widehat\vw(\vy)}{\sum_{\vy' \in Y} \exp \widehat\vw(\vy')}
    \propto \exp \widehat\vw(\vy'),
\end{align*}
and the exponential function is strictly increasing.
This means scores of outputs $\vy \in Y$ defined as $\langle \vy, \widehat\vw \rangle$ are ordered in the same way as probabilities in the data distribution $p(\rvy|\rvx = \vx)$.
In other words, the most probable output in the data distribution will have the highest score with respect to $\widehat\vw$.

The proof is a straightforward extension of the derivation for the multiclass classification case, see for example \citet[Section 4.2]{blondel2020fy}.
Interestingly, Equation~\ref{eq:nll_bayes} also implies that the NLL loss is strictly proper \cite{williamson2016}, \emph{i.e.}\ the Boltzmann distribution over structures in $Y$ parameterized by minimizer $\widehat\vw$ is equal to the data distribution $p(\rvy|\rvx = \vx)$.
Note that Theorem~\ref{th:nll} is not novel \emph{per se} and a more in-depth study of NLL losses for structured prediction can be found in \cite{nowak2019general}.

One limitation of the NLL loss is that it is not (additively) separable\footnote{A function $f$ is additively separable if it can be written as $f = \sum_i f_i.$} because of the log-sum-exp term.
As such, this term is a bottleneck for parallel computation of the objective and doubly stochastic estimation of the training objective \cite{titsias2016onevseach}.
A well-known solution is to rely on independent binary classification objectives, also known as one-vs-all losses \cite[Section 6.1]{blondel2020fy}:
\begin{align*}
    \widetilde{\ell}_{(one-vs-all)}(\vw, \vy)
    &\\
    &\hspace{-2.5cm}= - \langle \vw, \vy \rangle + \sum_{\vy' \in Y} \log \left(1 +  \exp\langle \vw, \vy' \rangle\right).
\end{align*}
In the case of multiclass classification problems, it can be shown that this objective is Bayes consistent using similar arguments as in Theorem~\ref{th:nll}.
A different approach is the one-vs-each loss function that is also Bayes consistent \cite{titsias2016onevseach}.

Unfortunately, these separable surrogates cannot be applied to structured prediction problems as the set $Y$ is often of exponential size with respect to the input length.
Although tractable algorithms for marginal inference exist for many cases, there are no known algorithms to compute the one-vs-each or one-vs-all losses in an easily parallelizable fashion.
As such, the NLP community often relies on token-separable losses, that is a NLL objective that decomposes as a sum of independent losses, one per token in the input sentence.
Although these losses are easy to implement, we prove in the next sections that they are not Bayes consistent for two common NLP problems.

\section{Named-entity recognition}

\textbf{Problem definition.}
In this Section, we focus on the flat NER problem using BIO tags \cite{ratinov2009bilou}.
Without loss of generality, we assume there is a single mention label and that the input sentence $\vx$ contains $n$ words.
The set of parts is defined as $C = [n] \times \{\textsc{B}, \textsc{I}, \textsc{O}\}$ and $Y$ is defined as the set of vectors $\vy \in \{0, 1\}^C$ satisfying the following conditions:
\begin{enumerate}
    \item $\forall i \in [n]: \sum_{t} \evy_{i, t} = 1$ (one tag per word);
    \item $\evy_{1, \textsc{I}} = 0$ (forbid inside tag for the first word);
    \item $\forall i > 1: \evy_{i, \textsc{I}} = 1 \implies \evy_{i-1, \textsc{B}} + \evy_{i-1, \textsc{I}} = 1$ (\textsc{I} tag can only follow a \textsc{B} or \textsc{I} tag).
\end{enumerate}
We do not include parts corresponding to transitions (this is a unigram model), otherwise it would not be possible to derive a token-separable loss.

\textbf{Inference algorithms.}
MAP and marginal inference can be realized using the Viterbi and the forward-backward algorithms, respectively.
Although the time complexity of these algorithms is $\mathcal O(|L|^2n)$ where $L$ is the set of mention labels, they can be optimized to have a $\mathcal O(|L|n)$ time complexity as there is no transition score.
The dynamic programming algorithm is nonetheless required in order to guarantee that condition (3) is satisfied.

\textbf{Separable loss.}
As the dynamic programming algorithm is not parallelizable over input tokens, token-separable losses are often used in practice.\footnote{See for example \url{https://github.com/huggingface/transformers/blob/v4.23.1/src/transformers/models/bert/modeling_bert.py\#L1771}}
That is, the loss is reduced to a set of $n$ multiclass classification losses:
\begin{align*}
    \widetilde{\ell}_{\text{(sep-bio)}}
    = - \langle \vw, \vy \rangle
    + \sum_{i = 1}^n \log \sum_{t} \exp \evw_{i, t},
\end{align*}
where $t$ ranges over all tags, except \textsc{I} if $i=1$.

The optimal pointwise surrogate Bayes risk for the separable loss is defined as:
\begin{align*}
    \widetilde{\risk}^*_{(sep-bio)}
    = \inf_{\vw \in \R^C} 
    \begin{array}[t]{l}
    - \E_{\rvy|\rvx = \vx}[\langle \vw, \rvy \rangle] \\
     + \sum_{i = 1}^n \log \sum_{t} \exp \evw_{i, t}.
     \end{array}
\end{align*}
Let $\widehat\vw$ be an optimal solution.
Then, by first order optimality conditions:
\begin{align}
    & \nonumber
    \frac{\partial}{\partial \widehat\evw_{i, t}}
    \left(\begin{array}{l}
    - \E_{\rvy|\rvx = \vx}[\langle \vw, \rvy \rangle] \\
     + \sum_{i = 1}^n \log \sum_{t} \exp \evw_{i, t}
     \end{array}\right)
    = 0
    \\
    & \label{eq:unstruct_bio}
    \implies \widehat\evw_{i, t} = \log p\left(\ervy_{i, t} = 1 | \rvx = \vx\right)
\end{align}
where $p\left(\ervy_{i, t} = 1 | \rvx = \vx\right)$ denotes the marginal distribution of tag $t$ at position $i$ in data distribution.

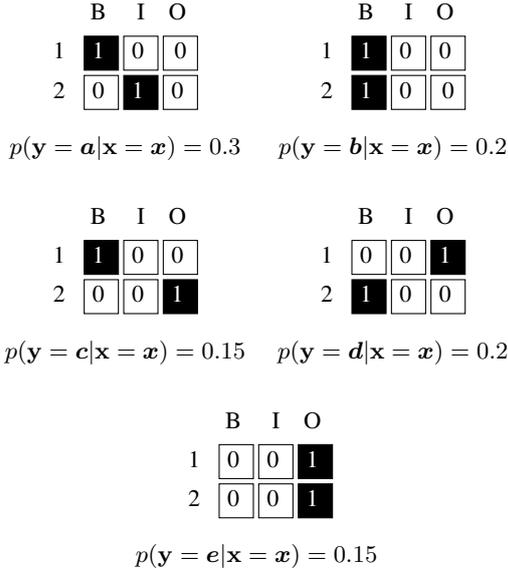
\begin{figure}[t!]
    \centering
    \begin{tabular}{cc}
         \begin{tikzpicture}
\small
    \matrix (m) [matrix of nodes,
                 nodes in empty cells,
                 nodes={
                    rectangle,
                    draw=black,
                    align=center,
                    minimum height=4.5mm,
                    minimum width=4.5mm,
                    text depth=0ex,
                    text height=1ex,
                    inner xsep=0pt,
                    outer sep=0pt
                 },
                 column sep=-\pgflinewidth+2,
                 row sep=-\pgflinewidth+2,
                 ]
    {
        |[draw=none]|   &|[draw=none]|  \textsc{B}  &|[draw=none]| \textsc{I}&|[draw=none]| \textsc{O} \\
        |[draw=none]|1 & |[fill=black,text=white]| 1  &0 &  0\\
        |[draw=none]|2 & 0  & |[fill=black,text=white]| 1 & 0 \\
    };
    \end{tikzpicture}
         & \begin{tikzpicture}
\small
    \matrix (m) [matrix of nodes,
                 nodes in empty cells,
                 nodes={
                    rectangle,
                    draw=black,
                    align=center,
                    minimum height=4.5mm,
                    minimum width=4.5mm,
                    text depth=0ex,
                    text height=1ex,
                    inner xsep=0pt,
                    outer sep=0pt
                 },
                 column sep=-\pgflinewidth+2,
                 row sep=-\pgflinewidth+2,
                 ]
    {
        |[draw=none]|   &|[draw=none]| \textsc{B}  &|[draw=none]| \textsc{I}&|[draw=none]| \textsc{O} \\
        |[draw=none]|1 & |[fill=black,text=white]| 1 &0 &0  \\
        |[draw=none]|2 &  |[fill=black,text=white]| 1 &0 &  0\\
    };
    \end{tikzpicture}
         \\
         {\small $p(\rvy = \va | \rvx = \vx) = 0.3$}
         & {\small $p(\rvy = \vb | \rvx = \vx) = 0.2$}
         \\[1em]
         \begin{tikzpicture}
\small
    \matrix (m) [matrix of nodes,
                 nodes in empty cells,
                 nodes={
                    rectangle,
                    draw=black,
                    align=center,
                    minimum height=4.5mm,
                    minimum width=4.5mm,
                    text depth=0ex,
                    text height=1ex,
                    inner xsep=0pt,
                    outer sep=0pt
                 },
                 column sep=-\pgflinewidth+2,
                 row sep=-\pgflinewidth+2,
                 ]
    {
        |[draw=none]|   &|[draw=none]| \textsc{B}  &|[draw=none]| \textsc{I}&|[draw=none]| \textsc{O} \\
        |[draw=none]|1 & |[fill=black,text=white]| 1  & 0 & 0 \\
        |[draw=none]|2 & 0 &0 & |[fill=black,text=white]| 1  \\
    };
    \end{tikzpicture}
         & \begin{tikzpicture}
\small
    \matrix (m) [matrix of nodes,
                 nodes in empty cells,
                 nodes={
                    rectangle,
                    draw=black,
                    align=center,
                    minimum height=4.5mm,
                    minimum width=4.5mm,
                    text depth=0ex,
                    text height=1ex,
                    inner xsep=0pt,
                    outer sep=0pt
                 },
                 column sep=-\pgflinewidth+2,
                 row sep=-\pgflinewidth+2,
                 ]
    {
        |[draw=none]|   &|[draw=none]| \textsc{B}  &|[draw=none]| \textsc{I}&|[draw=none]| \textsc{O} \\
        |[draw=none]|1 & 0 &0 & |[fill=black,text=white]| 1 \\
        |[draw=none]|2 &  |[fill=black,text=white]| 1 & 0 & 0 \\
    };
    \end{tikzpicture}
         \\
         {\small $p(\rvy = \vc | \rvx = \vx) = 0.15$}
         & {\small $p(\rvy = \vd | \rvx = \vx) = 0.2$}
         \\[1em]
         \multicolumn{2}{c}{\begin{tikzpicture}
\small
    \matrix (m) [matrix of nodes,
                 nodes in empty cells,
                 nodes={
                    rectangle,
                    draw=black,
                    align=center,
                    minimum height=4.5mm,
                    minimum width=4.5mm,
                    text depth=0ex,
                    text height=1ex,
                    inner xsep=0pt,
                    outer sep=0pt
                 },
                 column sep=-\pgflinewidth+2,
                 row sep=-\pgflinewidth+2,
                 ]
    {
        |[draw=none]|   &|[draw=none]| \textsc{B}  &|[draw=none]| \textsc{I}&|[draw=none]| \textsc{O} \\
        |[draw=none]|1 & 0 &0 & |[fill=black,text=white]| 1 \\
        |[draw=none]|2 &  0 & 0 &|[fill=black,text=white]| 1 \\
    };
    \end{tikzpicture}}
         \\
         \multicolumn{2}{c}{{\small $p(\rvy = \ve | \rvx = \vx) = 0.15$}}
    \end{tabular}

    \caption{
        Example of distribution over BIO sequences for a sentence of 2 words. The set of sequences is defined as $Y = \{ \va, \vb, \vc, \vd, \ve \}.$
        The matrices represent values in elements of $Y$.
    }
    \label{distrib:ner}
\end{figure}

\begin{theorem}
The token-separable loss for NER via BIO tagging is not Bayes consistent.
\end{theorem}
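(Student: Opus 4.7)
The plan is to prove non-consistency by exhibiting an explicit counterexample, using exactly the data distribution displayed in Figure~\ref{distrib:ner}. Since Equation~\ref{eq:unstruct_bio} already gives a closed form for the optimal pointwise minimizer $\widehat\vw$ of the surrogate risk, the proof reduces to a finite numerical verification: compute the token marginals of the distribution, substitute them into $\widehat\evw_{i,t} = \log p(\ervy_{i,t}=1 \mid \rvx = \vx)$, run MAP inference through Equation~\ref{eq:map}, and compare the MAP prediction to the mode of the joint distribution.

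First I would read off the marginals from Figure~\ref{distrib:ner}. At position~$1$ the marginals are $p(\textsc{B}) = 0.65$ and $p(\textsc{O}) = 0.35$; at position~$2$ they are $p(\textsc{B}) = 0.4$, $p(\textsc{I}) = 0.3$ and $p(\textsc{O}) = 0.3$. By Equation~\ref{eq:unstruct_bio}, these logarithms are (up to additive shifts, which are irrelevant for MAP) exactly the scores $\widehat\evw_{i,t}$ of the optimal surrogate minimizer. Since $\widehat\vw$ is parameterized independently at each token, it is trivially representable as a vector in $\R^C$, so the minimizer is actually attained and the pointwise reduction is valid.

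Next I would evaluate $\langle \vy, \widehat\vw \rangle$ for each of the five admissible sequences $\va,\vb,\vc,\vd,\ve$. Because $\langle \vy, \widehat\vw\rangle = \log \prod_i p(\ervy_{i,t}=1\mid \rvx = \vx)$, this amounts to multiplying the two token marginals for each sequence: $0.65\cdot 0.3$ for $\va$, $0.65\cdot 0.4$ for $\vb$, $0.65\cdot 0.3$ for $\vc$, $0.35\cdot 0.4$ for $\vd$, and $0.35\cdot 0.3$ for $\ve$. The unique maximizer is $\vb$, yet the mode of the joint distribution is $\va$ (probability $0.3$ versus $0.2$). Thus MAP inference on the optimal surrogate minimizer returns a suboptimal prediction for the $0$--$1$ loss, contradicting Bayes consistency.

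I do not expect a serious obstacle: the argument is a one-example refutation and the hardest part is merely picking a distribution whose token-wise marginals disagree with its joint mode. The distribution in Figure~\ref{distrib:ner} was designed precisely so that the $(\textsc{B},\textsc{B})$ sequence $\vb$ is not the mode but both of its tokens have the largest marginal, which is exactly the pathology that separable surrogates are blind to. The only subtlety worth spelling out is the representability of $\widehat\vw$ in $\R^C$, which follows for free in the unigram BIO model since parts are already indexed by (position, tag) and the loss is separable across positions.
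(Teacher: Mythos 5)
Your proposal is correct and follows essentially the same route as the paper: it uses the exact distribution of Figure~\ref{distrib:ner}, plugs the token marginals into Equation~\ref{eq:unstruct_bio}, and observes that the separable minimizer scores $\vb$ above the true mode $\va$ ($0.65\cdot 0.4$ versus $0.65\cdot 0.3$). Your additional checks --- verifying $\vb$ is the \emph{unique} MAP over all five sequences and noting that $\widehat\vw$ is trivially representable in $\R^C$ --- are harmless elaborations of what the paper leaves implicit.
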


\begin{proof}
Let $n=2$ and assume that the distribution $p(\rvy | \rvx = \vx)$ is defined as depicted in Figure~\ref{distrib:ner}.
Then, by Equation~\ref{eq:unstruct_bio} we have $\widehat\evw_{1, \textsc{B}} = \log 0.65$, $\widehat\evw_{1, \textsc{O}} = \log 0.35$, $\widehat\evw_{2, \textsc{B}} = \log 0.4$, $\widehat\evw_{2, \textsc{I}} = \log 0.3$ and $\widehat\evw_{2, \textsc{O}} = \log 0.3$.
As such:
\begin{align*}
    \log 0.65 + \log 0.3 &= \langle \widehat\vw, \va \rangle \\
    &< \langle \widehat\vw, \vb \rangle = \log 0.65 + \log 0.4,
\end{align*}
but $\va \in \argmax_{\vy \in Y} p(\rvy = \vy | \rvx = \vx)$ and $p(\rvy = \va | \rvx = \vx) > p(\rvy = \vb | \rvx = \vx)$.
Therefore, the token-separable loss is not Bayes consistent for NER, \emph{i.e.}\ a scoring model minimizing the surrogate risk may not lead to predicting the most probable output in the data distribution.
\end{proof}

Note that the inconsistency is not due to the fact that the parameterization of the model is ``poor'' (no transition scores).
Indeed, by Equation~\ref{eq:nll_bayes}, optimal scores $\widehat\vw$ for the NLL loss satisfy the following condition for all $\vy \in Y$:
\begin{align*}
    &\frac{\exp \langle \widehat\vw, \vy\rangle}{\sum_{\vy' \in Y} \exp \langle \widehat\vw, \vy' \rangle} = p(\rvy = \vy | \rvx = \vx) \\
    &\implies \langle \widehat\vw, \vy \rangle = \log p(\rvy = \vy | \rvx = \vx)
\end{align*}
The following assignment for $\widehat\vw$ satisfies this condition:
$\widehat\evw_{1, \textsc{B}} = 0$,
$\widehat\evw_{1, \textsc{O}} = 0$,
$\widehat\evw_{2, \textsc{B}} = \log 0.2$,
$\widehat\evw_{2, \textsc{I}} = \log 0.3,$ and
$\widehat\evw_{2, \textsc{O}} = \log 0.15$.
That is, minimizing the NLL loss on this distribution results in a Bayes consistent classifier, as expected.
\section{Syntactic dependency parsing}

\textbf{Problem definition.}
We consider a sentence of $n$ words and, without loss of generality, restrict to the unlabeled case to simplify notations.
In dependency parsing, the set of parts is defined as the set of possible bilexical dependencies between words, including a fake root at position $0$ used to identify root word(s) of the sentence, \emph{i.e.}\ $C = \{(h, m) \in \{0, 1,...,n\}\times [n] | h \neq m \}$, where $(h, m)$ denotes a dependency with the $h$-th word as head and the $m$-th word as modifier.
The set $Y$ is restricted to vectors $\vy \in \{0, 1\}^C$ that can be interpreted as forming a $0$-rooted spanning arborescence where words are vertices and dependencies are arcs \cite{mcdonald2005msa}.
In some cases, \emph{e.g.}\ the Universal Dependency format, it is required that the fake root position has a single outgoing arc.\footnote{\url{https://universaldependencies.org/u/overview/syntax.html}}

\textbf{Inference algorithms.}
MAP inference can be realized via the maximum spanning arborescence algorithm, which has a $\mathcal O(n^2)$ time complexity \cite{chu1965msa,edmonds1967msa,tarjan1977msa}.
The single root constraint can be taken into account using the same algorithm via the big-$M$ trick \cite[][Section 2]{fischetti1992additive}.
Marginal inference can be realized via the matrix tree theorem \cite[MTT,][]{koo2007mtt,mcdonald2007mtt,smith2007mtt}, which has $\mathcal O(n^3)$ time complexity.

\textbf{Separable loss.}
The cubic-time complexity of MTT may be prohibitive  in practice for training a model.
Moreover, the MTT relies on a computationally unstable matrix inversion and is arguably non-trivial to implement.
Hence, there has been interest in using simpler token-separable NLL loss functions \cite{zhang2017head}:
\begin{align*}
    &\widetilde{\ell}_{(sep-dep)}(\vw, \vy) \\
    &\hspace{+0.8cm}=- \langle \vw, \vy \rangle
     + \sum_{m \in [n]} \log \hspace{-7pt} \sum_{h \in [n] \setminus \{m\}} \hspace{-7pt} \exp \evw_{h, m},
\end{align*}
also called head selection loss.
This loss is a sum of multiclass classification NLL losses, one per word in the sentence, and is therefore token-separable.
As such, it can be efficiently parallelized on GPU and is trivial to implement in any ML framework.

The optimal pointwise surrogate Bayes risk for the token-separable loss is defined as:
\begin{align*}
    \widetilde{\risk}^*_{(sep-dep)}
    = \inf_{\vw \in \R^A} 
    \begin{array}[t]{l}
    - \E_{\rvy|\rvx = \vx}[\langle \vw, \rvy \rangle] \\
     + \sum\limits_{m \in [n]} \log \hspace{-7pt} \sum\limits_{h \in [n] \setminus \{m\}} \hspace{-7pt} \exp \evw_{h, m}.
     \end{array}
\end{align*}
Let $\widehat\vw$ be an optimal solution.
Then, by first order optimality conditions:
\begin{align}
    & \nonumber
    \frac{\partial}{\partial \widehat\evw_{h, m}}
    \left(\begin{array}{l}
        - \E_{\rvy|\rvx = \vx}[\langle \widehat\vw, \rvy \rangle] \\
        + \sum\limits_{m \in [n]} \log \hspace{-7pt} \sum\limits_{h \in [n] \setminus \{m\}} \hspace{-7pt} \exp \widehat\evw_{h, m}
    \end{array}\right)
    = 0
    \\
    & \label{eq:unstruct_opt}
    \implies \widehat\evw_{h, m} = \log p\left(\ervy_{h, m} = 1 | \rvx = \vx\right)
\end{align}
where $p\left(\ervy_{h, m} = 1 | \rvx = \vx\right)$ denotes the marginal distribution of the dependency between words at position $h$ and $m$, \emph{i.e.}\ the sum of the conditional probability of trees this dependency appears in.

\begin{figure}[t!]
    \centering
    \begin{tabular}{ccc}
         \begin{tikzpicture}[
    scale=1,
    every node/.style={
        scale=1,
        circle,
        fill=black,
        inner sep=1.5pt
    }
]
    \node[label={below:{\small 0}}] (0) at (0, 0) {};
    \node[label={below:{\small 1}}] (1) at (1, 0) {};
    \node[label={below:{\small 2}}] (2) at (2, 0) {};
    
    \draw[->] (0) to [bend left=40] (1);
    \draw[->] (1) to [bend left=40] (2);
\end{tikzpicture} &
         \begin{tikzpicture}[
    scale=1,
    every node/.style={
        scale=1,
        circle,
        fill=black,
        inner sep=1.5pt
    }
]
    \node[label={below:{\small 0}}] (0) at (0, 0) {};
    \node[label={below:{\small 1}}] (1) at (1, 0) {};
    \node[label={below:{\small 2}}] (2) at (2, 0) {};
    
    \draw[->] (0) to [bend left=40] (1);
    \draw[->] (0) to [bend left=40] (2);
\end{tikzpicture}
         \\
         {\small $p(\rvy = \va | \rvx = \vx) = 0.4$}&
         {\small $p(\rvy = \vb | \rvx = \vx) = 0.3$}
         \\[0.3cm]
         \multicolumn{2}{c}{
            \begin{tikzpicture}[
    scale=1,
    every node/.style={
        scale=1,
        circle,
        fill=black,
        inner sep=1.5pt
    }
]
    \node[label={below:{\small 0}}] (0) at (0, 0) {};
    \node[label={below:{\small 1}}] (1) at (1, 0) {};
    \node[label={below:{\small 2}}] (2) at (2, 0) {};
    
    \draw[->] (0) to [bend left=40] (2);
    \draw[->] (2) to [bend right=40] (1);
\end{tikzpicture}
         }
         \\
         \multicolumn{2}{c}{
            {\small $p(\rvy = \vc | \rvx = \vx) = 0.3$}
         }
    \end{tabular}
    \caption{
    Example of distribution over trees for a sentence of 3 words. The set of trees is defined as $Y = \{ \va, \vb, \vc \}.$
    }
    \label{distrib:single}
\end{figure}
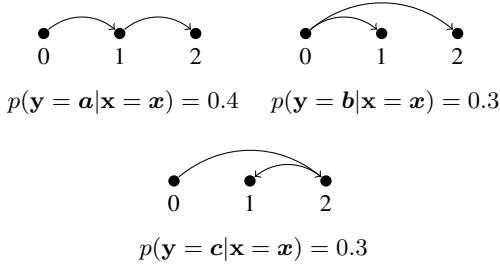

\begin{theorem}

The loss $\widetilde{\ell}_{\text{(sep-dep)}}$ is not Bayes consistent for distributions over dependency trees.
\end{theorem}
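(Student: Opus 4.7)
The plan is to mirror the proof strategy used for NER: exhibit a concrete data distribution over dependency trees such that the optimal token-separable scores (given by Equation~\ref{eq:unstruct_opt}) yield a MAP prediction that differs from the most probable tree. The natural candidate is the distribution already displayed in Figure~\ref{distrib:single}, which uses the three arborescences over vertices $\{0,1,2\}$ (so $n=2$ words plus the fake root, or equivalently $n=3$ with the convention of the section); I would simply verify that the counterexample goes through.

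First I would apply Equation~\ref{eq:unstruct_opt} to compute the arc marginals for every head-modifier pair $(h,m) \in C$ and set $\widehat\evw_{h,m} = \log p(\ervy_{h,m} = 1 \mid \rvx = \vx)$. For the distribution in Figure~\ref{distrib:single}, the arc $(0,1)$ appears in trees $\va$ and $\vb$ and thus has marginal $0.7$; the arc $(1,2)$ appears only in $\va$ with marginal $0.4$; the arc $(0,2)$ appears in $\vb$ and $\vc$ with marginal $0.6$; and the arc $(2,1)$ appears only in $\vc$ with marginal $0.3$. Substituting these into the linear scores $\langle \widehat\vw, \cdot \rangle$ of the three trees then gives $\langle \widehat\vw, \va \rangle = \log 0.28$, $\langle \widehat\vw, \vb \rangle = \log 0.42$, and $\langle \widehat\vw, \vc \rangle = \log 0.18$. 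Since $\log 0.42 > \log 0.28 > \log 0.18$, MAP inference under $\widehat\vw$ returns $\vb$, while the most probable tree under the data distribution is $\va$ (with probability $0.4$). This contradicts Bayes consistency and completes the argument.

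The only subtle point to check is that $\widehat\vw$ is indeed the (essentially unique) minimizer of the pointwise surrogate risk and that the value of $\widehat\evw_{h,m}$ on arcs with zero marginal does not affect the comparison; the first holds because the token-separable objective decomposes into independent strictly convex softmax terms, one per modifier $m$, so the first-order conditions in Equation~\ref{eq:unstruct_opt} characterize the minimizer up to a per-column additive constant that cancels in the inner products with any $\vy \in Y$ (each modifier has exactly one incoming arc), and the second is automatic because only the arcs used in $\va, \vb, \vc$ enter the three scores above. The main (very mild) obstacle is thus just bookkeeping: making sure the per-modifier shift invariance does not change the ordering of $\langle \widehat\vw, \va\rangle$, $\langle \widehat\vw, \vb\rangle$, $\langle \widehat\vw, \vc\rangle$, which is immediate since each tree contains exactly one head per modifier in $[n]$, so any additive shift in column $m$ contributes equally to all three scores.
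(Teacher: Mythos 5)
Your proposal is correct and follows essentially the same route as the paper: it uses the exact counterexample of Figure~\ref{distrib:single}, computes the same arc marginals ($0.7$, $0.6$, $0.4$, $0.3$), and reaches the same conclusion that $\langle \widehat\vw, \vb \rangle = \log 0.42 > \log 0.28 = \langle \widehat\vw, \va \rangle$ while $\va$ is the most probable tree. Your additional remark that the per-modifier additive indeterminacy in the first-order conditions cancels in the inner products (since every tree selects exactly one head per modifier) is a small but worthwhile tightening that the paper leaves implicit.
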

\begin{proof}
Let $n=2$ and assume that the distribution $p(\rvy | \rvx = \vx)$ is defined as depicted in Figure~\ref{distrib:single}.
Then, by Equation~\ref{eq:unstruct_opt} we have: $\widehat\evw_{ 0, 1 } = \log 0.7$, $\widehat\evw_{ 0, 2 } = \log 0.6$, $\widehat\evw_{ 1, 2 } = \log 0.4$ and $\widehat\evw_{ 2, 1 } = \log 0.3$.

\noindent As such:
\begin{align*}
    \log 0.7 + \log 0.4 &= \langle \widehat\vw, \va \rangle \\
    &< \langle \widehat\vw, \vb \rangle = \log 0.7 + \log 0.6,
\end{align*}
but $\va \in \argmax_{\vy \in Y} p(\rvy = \vy | \rvx = \vx)$ and $p(\rvy = \va | \rvx = \vx) > p(\rvy = \vb | \rvx = \vx)$.
Therefore, the token-separable loss is not Bayes consistent, \emph{i.e.}\ a scoring model minimizing the surrogate risk may not lead to predicting the most probable tree in the data distribution.
\end{proof}

Note that using the (non-separable) NLL loss will lead to a Bayes consistent model on this distribution.
Indeed, by Equation~\ref{eq:nll_bayes}, optimal scores $\widehat\vw$ satisfy the following condition for all $\vy \in Y$:
\begin{align*}
    &\frac{\exp \langle \widehat\vw, \vy\rangle}{\sum_{\vy' \in Y} \exp \langle \widehat\vw, \vy' \rangle} = p(\rvy = \vy | \rvx = \vx) \\
    &\implies \langle \widehat\vw, \vy \rangle = \log p(\rvy = \vy | \rvx = \vx)
\end{align*}
The following assignment for $\widehat\vw$ satisfies this condition:
$\widehat\evw_{0, 1} = 0$,
$\widehat\evw_{0, 2} = \log 0.3$,
$\widehat\evw_{1, 2} = \log 0.4$ and
$\widehat\evw_{2, 1} = 0$.
That is, minimizing the NLL loss on this distribution results in a Bayes consistent classifier, as expected.

The single root constraint case is reported in Appendix~\ref{app:singleroot}.

\section{Conclusion}

Studying statistical properties of surrogate loss functions has not been a major interest in the NLP community, although there are exceptions \cite{ma2018nce,effland2021partially}.
We proved that token-separable losses for NER and dependency parsing are not Bayes consistent, which means that minimizing these losses will not necessarily lead to models that will predict the most probable output for a given input in the data distribution, even with infinite training data.

In the dependency parsing case, \citet{zhang2020treecrf} experimentally observed that the structured NLL loss leads to better results than the token-separable head selection loss.
As such, our analysis provides a better theoretical understanding of these experiments.
However, separable losses are widely used in state-of-the-art models, which suggests that future research should study why they work in practice.

Other types of separability have also been used for constituency parsing \cite{corro2020disc} and semantic parsing \cite{pasupat2019top}, \emph{inter alia}.

\section*{Limitations}

Arguably, these separable loss functions perform well in practice, which questions the appropriateness of the Bayes consistency property.
For example, \citet{dasgupta2013hconsistency} argued that assumptions usually made are too unrealistic (\emph{e.g.}\ considering that $F$ is the set of all measurable mappings) and leads to misleading theoretical knowledge when it comes to actual implementation and experiments.
Maybe this is also the case of the demonstration we made in this paper.
However, all in all, we hope that this work will motivate future fundamental research on ML for NLP and especially on properties of loss functions.

\section*{Acknowledgments}

We thank Joseph Le Roux, François Yvon and the anonymous reviewers for their comments and suggestions.

\bibliography{refs}
\bibliographystyle{acl_natbib}

\cleardoublepage
\appendix

\section{Inconsistency of the separable loss for single root dependency parsing}
\label{app:singleroot}

\begin{theorem}
The loss $\widetilde{\ell}_{\text{(sep-dep)}}$ is not Bayes consistent for distributions over single-root dependency trees.
\end{theorem}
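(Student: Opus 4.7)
The plan is to mimic the proof of the previous (unconstrained) theorem, but on a distribution supported on valid single-root dependency trees. The derivation underlying Equation~\ref{eq:unstruct_opt} does not use any structural property of $Y$ beyond its finiteness: first-order optimality of the token-separable risk still yields $\widehat\evw_{h,m} = \log p(\ervy_{h,m} = 1 | \rvx = \vx)$. So the only task is to exhibit a distribution over single-root trees on which this log-marginal scoring misranks two trees.

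The previous counterexample does not transfer verbatim, because the competing tree $\vb$ there had two outgoing arcs from the fake root and is now excluded. With $n=2$ only two valid single-root trees exist, so I would pass to $n=3$, where there are nine valid single-root arborescences on $\{0,1,2,3\}$. The strategy is to put the bulk of the probability on a single ``correct'' tree $\va$, and to use one or two auxiliary trees to inflate the marginal of a specific arc so that a competing tree $\vb$, identical to $\va$ on all but one arc, achieves a strictly larger separable score.

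A concrete candidate I would verify is: let $\va=(0,1),(1,2),(2,3)$ with probability $0.4$; let $\vb=(0,1),(1,2),(1,3)$ with probability $0.3$; add an auxiliary tree $(0,1),(1,3),(3,2)$ with probability $0.2$, plus two small-mass single-root star trees rooted at $2$ and at $3$ (each of mass $0.05$) solely to keep every arc marginal strictly positive so the $\log$ is well-defined. By construction, $\vb$ shares its first two arcs with $\va$, so the score comparison reduces to $\widehat\evw_{1,3}$ versus $\widehat\evw_{2,3}$; the auxiliary tree also uses $(1,3)$, which boosts that marginal above the marginal of $(2,3)$, giving $\langle \widehat\vw, \vb \rangle > \langle \widehat\vw, \va \rangle$ while $p(\rvy=\va|\rvx=\vx)>p(\rvy=\vb|\rvx=\vx)$.

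The main obstacle is confirming that no third tree in $Y$ beats $\vb$ under the log-marginal scoring: a tree rooted at $2$ or $3$ carries two log-factors bounded by $\log 0.05$, which crushes any such score well below $\log 0.315 = \langle \widehat\vw, \vb \rangle$, and the remaining root-$1$ tree $(0,1),(1,3),(3,2)$ loses the $\widehat\evw_{1,2}$ factor, so only a brief finite check over nine trees is needed. Once $\vb$ is confirmed as the $\argmax$ of $\langle \widehat\vw, \cdot \rangle$ over $Y$ while $\va$ remains the unique mode of $p(\rvy|\rvx=\vx)$, Bayes inconsistency follows exactly as in the unconstrained case.
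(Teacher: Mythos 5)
Your proposal is correct and follows essentially the same route as the paper's own proof: the paper likewise moves to three words plus the fake root, builds an explicit distribution over single-root arborescences, and shows via Equation~\ref{eq:unstruct_opt} that the unique mode $\va$ is outscored by a tree $\vb$ differing in a single arc whose marginal has been inflated by auxiliary trees (your five-tree distribution checks out: the $(1,3)$ marginal is $0.5$ versus $0.45$ for $(2,3)$, all nine arc marginals are positive, and $\va$ remains the unique mode). The only superfluous step is your final check that $\vb$ is the global $\argmax$ of $\langle \widehat\vw, \cdot \rangle$; it suffices that the unique mode $\va$ is \emph{not} the $\argmax$, which already follows from $\langle \widehat\vw, \va \rangle < \langle \widehat\vw, \vb \rangle$, and the paper accordingly omits this check.
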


\begin{proof}
Let $n=4$ and assume that the distribution $p(\rvy | \rvx = \vx)$ is defined as depicted in Figure~\ref{distrib:multi}.
Then, by Equation~\ref{eq:unstruct_opt} we have, among others:
\begin{align*}
    \widehat\evw_{ 0, 1 } &= \log 0.55, &
    \widehat\evw_{ 1, 2 } &= \log 0.55, \\
    \widehat\evw_{ 1, 3 } &= \log 0.4, &
    \widehat\evw_{ 2, 3 } &= \log 0.45.
\end{align*}
As such, we have:
\begin{align*}
    \langle \widehat\vw, \va \rangle & = \log 0.55 + \log 0.55 + \log 0.40, \\
    \langle \widehat\vw, \vb \rangle &=\log 0.55 + \log 0.55 + \log 0.45,
\end{align*}
which means that $\langle \widehat\vw, \va \rangle < \langle \widehat\vw, \vb \rangle$ but $\va \in \argmax_{\vy \in Y} p(\rvy = \vy | \rvx = \vx)$ and $p(\rvy = \va | \rvx = \vx) > p(\rvy = \vb | \rvx = \vx)$.
Therefore the separable loss is not Bayes consistent for single-root dependency parsing.
\end{proof}
\newpage
\begin{figure}[t!]
    \centering
\begin{tabular}{@{}cc@{}}
     \begin{tikzpicture}[
    every node/.style={
        circle,
        fill=black,
        inner sep=1.5pt
    }
]
    \node[label={below:{\small 0}}] (0) at (0, 0) {};
    \node[label={below:{\small 1}}] (1) at (0.7, 0) {};
    \node[label={below:{\small 2}}] (2) at (1.4, 0) {};
    \node[label={below:{\small 3}}] (3) at (2.1, 0) {};
    
    \draw[->] (0) to [bend left=40] (1);
    \draw[->] (1) to [bend left=40] (2);
    \draw[->] (1) to [bend left=40] (3);
\end{tikzpicture} &
     \begin{tikzpicture}[
    every node/.style={
        circle,
        fill=black,
        inner sep=1.5pt
    }
]
    \node[label={below:{\small 0}}] (0) at (0, 0) {};
    \node[label={below:{\small 1}}] (1) at (0.7, 0) {};
    \node[label={below:{\small 2}}] (2) at (1.4, 0) {};
    \node[label={below:{\small 3}}] (3) at (2.1, 0) {};
    
    \draw[->] (0) to [bend left=40] (1);
    \draw[->] (1) to [bend left=40] (2);
    \draw[->] (2) to [bend left=40] (3);
\end{tikzpicture}
     \\
     {\small $p(\rvy = \va | \rvx = \vx) = 0.3$} &
     {\small $p(\rvy = \vb | \rvx = \vx) = 0.2$}
     \\[15pt]
     \begin{tikzpicture}[
    every node/.style={
        circle,
        fill=black,
        inner sep=1.5pt
    }
]
    \node[label={below:{\small 0}}] (0) at (0, 0) {};
    \node[label={below:{\small 1}}] (1) at (0.7, 0) {};
    \node[label={below:{\small 2}}] (2) at (1.4, 0) {};
    \node[label={below:{\small 3}}] (3) at (2.1, 0) {};
    
    \draw[->] (0) to [bend left=40] (1);
    \draw[->] (1) to [bend left=40] (3);
    \draw[->] (3) to [bend right=40] (2);
\end{tikzpicture} &
     \begin{tikzpicture}[
    every node/.style={
        circle,
        fill=black,
        inner sep=1.5pt
    }
]
    \node[label={below:{\small 0}}] (0) at (0, 0) {};
    \node[label={below:{\small 1}}] (1) at (0.7, 0) {};
    \node[label={below:{\small 2}}] (2) at (1.4, 0) {};
    \node[label={below:{\small 3}}] (3) at (2.1, 0) {};
    
    \draw[->] (0) to [bend left=40] (2);
    \draw[->] (2) to [bend right=40] (1);
    \draw[->] (2) to [bend left=40] (3);
\end{tikzpicture}
     \\
     {\small $p(\rvy = \vc | \rvx = \vx) = 0.05$} &
     {\small $p(\rvy = \vd | \rvx = \vx) = 0.2$}
     \\[15pt]
     \begin{tikzpicture}[
    every node/.style={
        circle,
        fill=black,
        inner sep=1.5pt
    }
]
    \node[label={below:{\small 0}}] (0) at (0, 0) {};
    \node[label={below:{\small 1}}] (1) at (0.7, 0) {};
    \node[label={below:{\small 2}}] (2) at (1.4, 0) {};
    \node[label={below:{\small 3}}] (3) at (2.1, 0) {};
    
    \draw[->] (0) to [bend left=40] (2);
    \draw[->] (2) to (1);
    \draw[->] (1) to [bend left=40] (3);
\end{tikzpicture} &
     \begin{tikzpicture}[
    every node/.style={
        circle,
        fill=black,
        inner sep=1.5pt
    }
]
    \node[label={below:{\small 0}}] (0) at (0, 0) {};
    \node[label={below:{\small 1}}] (1) at (0.7, 0) {};
    \node[label={below:{\small 2}}] (2) at (1.4, 0) {};
    \node[label={below:{\small 3}}] (3) at (2.1, 0) {};
    
    \draw[->] (0) to [bend left=40] (2);
    \draw[->] (2) to (3);
    \draw[->] (3) to [bend right=40] (1);
\end{tikzpicture}
     \\
     {\small $p(\rvy = \ve | \rvx = \vx) = 0.05$} &
     {\small $p(\rvy = \vf | \rvx = \vx) = 0.05$}
     \\[15pt]
     \begin{tikzpicture}[
    every node/.style={
        circle,
        fill=black,
        inner sep=1.5pt
    }
]
    \node[label={below:{\small 0}}] (0) at (0, 0) {};
    \node[label={below:{\small 1}}] (1) at (0.7, 0) {};
    \node[label={below:{\small 2}}] (2) at (1.4, 0) {};
    \node[label={below:{\small 3}}] (3) at (2.1, 0) {};
    
    \draw[->] (0) to [bend left=40] (3);
    \draw[->] (3) to [bend right=40] (1);
    \draw[->] (3) to [bend right=40] (2);
\end{tikzpicture} &
     \begin{tikzpicture}[
    every node/.style={
        circle,
        fill=black,
        inner sep=1.5pt
    }
]
    \node[label={below:{\small 0}}] (0) at (0, 0) {};
    \node[label={below:{\small 1}}] (1) at (0.7, 0) {};
    \node[label={below:{\small 2}}] (2) at (1.4, 0) {};
    \node[label={below:{\small 3}}] (3) at (2.1, 0) {};
    
    \draw[->] (0) to [bend left=40] (3);
    \draw[->] (3) to [bend right=40] (1);
    \draw[->] (1) to [bend left=40] (2);
\end{tikzpicture}
     \\
     {\small $p(\rvy = \vg | \rvx = \vx) = 0.05$} &
     {\small $p(\rvy = \vh | \rvx = \vx) = 0.05$}
     \\[15pt]
     \multicolumn{2}{c}{\begin{tikzpicture}[
    every node/.style={
        circle,
        fill=black,
        inner sep=1.5pt
    }
]
    \node[label={below:{\small 0}}] (0) at (0, 0) {};
    \node[label={below:{\small 1}}] (1) at (0.7, 0) {};
    \node[label={below:{\small 2}}] (2) at (1.4, 0) {};
    \node[label={below:{\small 3}}] (3) at (2.1, 0) {};
    
    \draw[->] (0) to [bend left=40] (3);
    \draw[->] (3) to [bend right=40] (2);
    \draw[->] (2) to [bend right=40] (1);
\end{tikzpicture}} \\
     \multicolumn{2}{c}{{\small $p(\rvy = \vi | \rvx = \vx) = 0.05$}}
\end{tabular}
    \caption{
    Example of distribution over trees for a sentence of 3 words and the single-root constraints. The set of trees is defined as $Y = \{ \va, \vb, \vc, \vd, \ve, \vf, \vg, \vh, \vi \}.$
    }
    \label{distrib:multi}
\end{figure}

\mbox{}

\end{document}